\def\eqref#1{equation~\ref{#1}}
\def\1{\bm{1}}
\DeclareMathAlphabet{\mathsfit}{\encodingdefault}{\sfdefault}{m}{sl}
\SetMathAlphabet{\mathsfit}{bold}{\encodingdefault}{\sfdefault}{bx}{n}
\DeclareMathOperator*{\argmin}{arg\,min}
\newtheorem{theorem}{Theorem}[section]
\newcommand{\ignore}[1]{{}}
\newcommand{\mypm}{{\!\!\;\pm\!}}
\newcommand{\reals}{\mathbb{R}}
\newcommand{\xvec}{\mathbf{x}}
\title{Text2Model: Text-based Model Induction \\for Zero-shot Image Classification}
\author{Ohad Amosy \\ Bar Ilan University, Israel \\ \texttt{amosy3@gmail.com} \\
  \And
  Tomer Volk \\ {\bf Eilam Shapira} \\ {\bf Eyal Ben-David} \\ {\bf Roi Reichart}  \\
  Technion - IIT, Israel \\
  \And
  Gal Chechik \\ Bar Ilan University, Israel \\ NVIDIA Research, Israel \\
  \\
  }
\begin{document}
\maketitle
\begin{abstract}
We address the challenge of building task-agnostic classifiers using only text descriptions, demonstrating a unified approach to image classification, 3D point cloud classification, and action recognition from scenes.
Unlike approaches that learn a fixed representation of the output classes, we \textit{generate at inference time a model} tailored to a query classification task. To generate task-based zero-shot classifiers, we train a hypernetwork that receives class descriptions and outputs a multi-class model. The hypernetwork is designed to be equivariant with respect to the set of descriptions and the classification layer, thus obeying the symmetries of the problem and improving generalization. 
Our approach generates non-linear classifiers, 
handles rich textual descriptions, and may be adapted to produce lightweight models efficient enough for on-device applications.
We evaluate this approach in a series of zero-shot classification tasks, for image, point-cloud, and action recognition, using a range of text descriptions: From single words to rich descriptions. Our results demonstrate strong improvements over previous approaches, showing that zero-shot learning can be applied with little training data.
Furthermore, we conduct an analysis with foundational vision and language models, demonstrating that they struggle to generalize when describing what attributes the class lacks.
\end{abstract}

\section{Introduction}
\label{sec:intro}

\begin{figure}[t]
    \centering
    \includegraphics[width=1.0\linewidth]{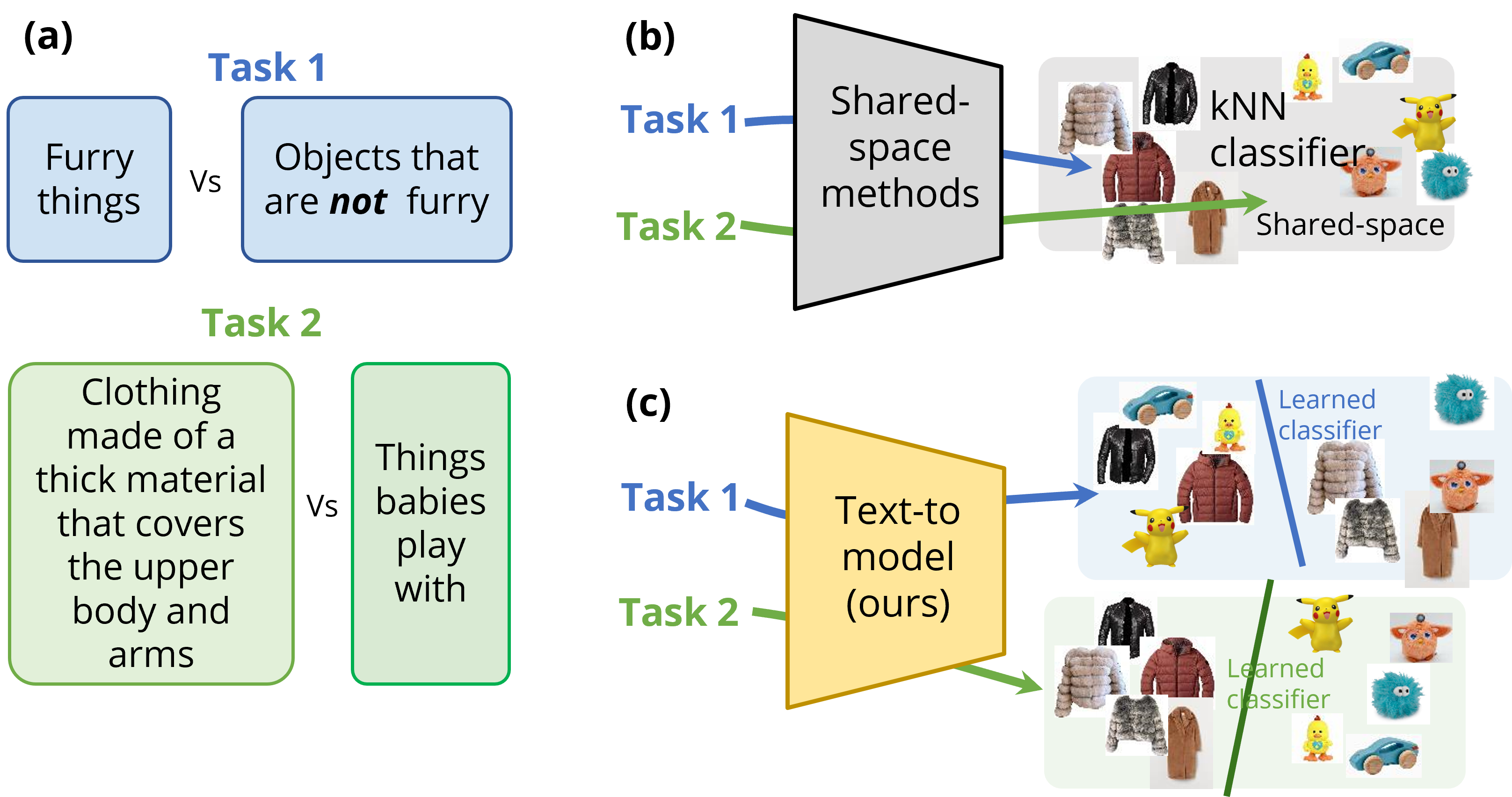}
    \caption{The text-to-model (T2M) setup. (a) Classification tasks are described in rich language. (b) Traditional zero-shot methods produce static representations, shared for all tasks. (c) T2M 
    generates \textit{task-specific representations and classifiers}. This allows T2M to extract task-specific discriminative features.} 
    \label{fig:fig1abc}
\end{figure}

We explore the challenge of zero-shot image classification by leveraging text descriptions. This approach pushes the boundaries of conventional classification methods by demanding that models categorize images into specific classes based solely on written descriptions, without having previously encountered these classes during training.\footnote{We note that our definitions of ``zero shot'' or ``zero shot learning'' are slightly different than the ones used in the context of text-only language models.}
In various domains, numerous attempts have been made to achieve zero-shot classification capacity (\S\ref{sec:related-work}). Unfortunately, as we now explain, existing studies are limited in two major ways: (1) Query-dependence; and (2) Richness of Language description.

First, \textit{Query-dependence}. To illustrate the issue, consider a popular family of zero-shot learning (ZSL) approaches,  which maps text (like class labels) and images to a shared space
\cite{globerson2004euclidean,zhang2015zero, zhang2017learning,sung2018learning, Pahde_2021_WACV}. 
To classify a new image from an unseen class, one finds the closest class label in the shared space. The problem with this family of shared-space approaches is that the learned representation (and the kNN classifier that it induces) remains "frozen" after training, and is not tuned to the classification task given at inference time. For instance, \textit{furry toys} would be mapped to the same shared representation regardless of whether they are to be distinguished from other \textit{toys}, or from other \textit{furry things} (see Figure \ref{fig:fig1abc}). 
The same limitation also hinders another family of ZSL approaches, which synthesize samples from unseen classes at inference time using conditional generative models, and use these samples with kNN classification \cite{elhoseiny2019creativity, jha2021imaginative}. Some approaches address the query-dependence limitation by assuming that test descriptions are known during training \cite{han2021contrastive, schonfeld2019generalized}, or by (costly) training a classifier or generator at inference time \cite{xian2018feature, schonfeld2019generalized}.  Instead, here we learn a model that produces task-dependent classifiers and representations without test-time training.

\begin{table*}[t!]
    \centering
    \scalebox{0.85}{
    \setlength{\tabcolsep}{3pt} %

\small
\begin{tabular}{|c|c|c|l|}
    \toprule 
    \textbf{Dataset}   &  \textbf{Sample} &  \textbf{Description } & \textbf{Example } 
    \\ 
    \textbf{name and type}   &  \textbf{ data } &  \textbf{ type} & \textbf{description} 
    \\ \midrule
    
    \multirow{4}{*}{ \begin{tabular}{c} ~ \\ AwA \cite{lampert2009learning-AwA}  \\ Animal \\ images \end{tabular}} & \multirow{4}{*}{\begin{tabular}{c}  \includegraphics[width=0.3\textwidth]{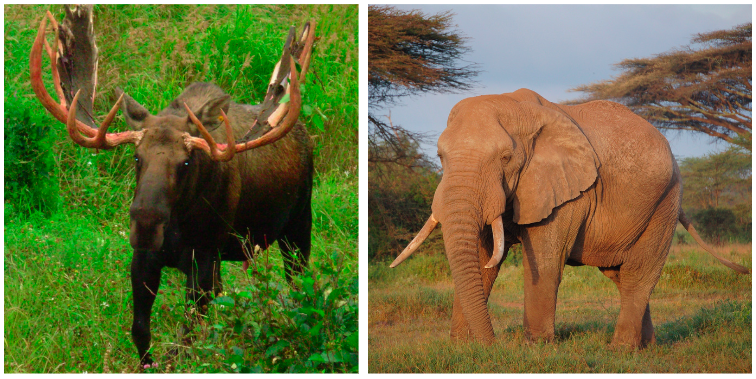} \end{tabular}} &  Class name  
    & \begin{tabular}{l}(1) \textit{Moose} \\ (2) \textit{Elephant} \end{tabular}
    \\  \cline{3-4}
    &  &\begin{tabular}{c}Long \end{tabular}  
    & \begin{tabular}{l}(1) \textit{``An animal of the deer family with humped} \\ \textit{ shoulders, long legs, and a large head with antlers.''},\\ (2) \textit{``A plant-eating mammal with a long trunk,}\\ \textit{large ears, and thick, grey skin.''} \end{tabular} 
    \\ \cline{3-4}
    &  & \begin{tabular}{c}Negative \end{tabular} & \begin{tabular}{l} (1) \textit{``An animal without stripes and not gray''},\\ (2) \textit{``An animal without fur and without horns''}\end{tabular} 
    \\ \cline{3-4}
    &  & \begin{tabular}{c} Attribute \end{tabular} & \begin{tabular}{l} (1) \textit{``Animals with fur''} \\ (2) \textit{``Animals with long trunk''} \end{tabular} 
    
    \\ \bottomrule
    
    \end{tabular}
    }
    \caption{An illustration depicting the diverse tasks within the AwA dataset is provided. Appendix~\ref{app:overview-table} contains illustrations for the remaining datasets.}
    
    \label{tab:datasets-short}
\end{table*}

The second limitation is \textit{language richness}. Natural language can be used to describe classes in complex ways. Most notably, people use negative terms, like "dogs without fur", to distinguish class members from other items. Previous work could only handle limited richness of language descriptions. For instance, it cannot represent adequately textual descriptions with negative terms \cite{akata2015evaluation, xie2021vman, xie2021cross, elhoseiny2019creativity, jha2021imaginative}. In this paper, we wish to handle the inherent linguistic richness of natural language. 

An alternative approach to address zero shot image recognition tasks involves leveraging large generative vision and language models (e.g., GPT4Vision). These foundational models, trained on extensive datasets, exhibit high performance in zero and few-shot scenarios. However, these models are associated with certain limitations: (1) They entail significant computational expenses in both training and inference. (2) Their training is specific to particular domains (e.g., vision and language) and may not extend seamlessly to other modalities (e.g., 3D data and language). (3) Remarkably, even state-of-the-art foundational models encounter challenges when confronted with tasks involving uncommon descriptions, as demonstrated in \S\ref{sec:negatives}.

In addition to the limitations posed by large generative models, there is a growing demand for smaller, more efficient models that can run on edge devices with limited computational power, such as mobile phones, embedded systems, or drones. Giant models that require cloud-based infrastructures are often computationally expensive and not suitable for real-time, on-device applications. Furthermore, some companies are unable to rely on cloud computing due to privacy concerns or legal regulations that mandate keeping sensitive user data within their local networks (on-premises). Our approach addresses these needs by enabling the automatic generation of task-specific models that are lightweight and capable of running on weaker devices without requiring cloud resources.

Here, we describe a novel deep network architecture and a learning workflow that addresses these two aspects: (1) generating a discriminative model tuned to requested classes at query time and (2) supporting rich language and negative terms. 

To achieve these properties, we propose an approach based on hypernetworks (HNs) \cite{ha2016hypernetworks}. An HN is a deep network that emits the weights of another deep network (see Figure \ref{fig:architecture} for an illustration). Here, the HN receives a set of class descriptions and emits a multi-class model that can classify images according to these classes. 
Interestingly, this text-image ZSL setup has an important symmetric structure. 
In essence, if the order of input descriptions is permuted, one would expect the same classifiers to be produced, reflecting the same permutation applied to the outputs.
This property is called \textit{equivariance}, and it can be leveraged to design better architectures \cite{finzi2020generalizing, cohen2019general, kondor2018generalization, finzi2021practical}. Taking invariance and equivariance into account has been shown to provide significant benefits for learning in spaces with symmetries like sets \cite{zaheer2017deep, maron2020learning-DSS,amosy2024late} graphs \cite{herzig2018mapping,wu2020comprehensive} and deep weight spaces \cite{navon2023}.
In general, however, HNs are not always permutation equivariant. 
We design invariant and equivariant layers and describe an HN architecture that respects the symmetries of the problem, and term it T2M-HN:  \textit{a text-to-model hypernetwork}.

\begin{figure*}[t]
    \centering
    \includegraphics[width=0.90\linewidth]{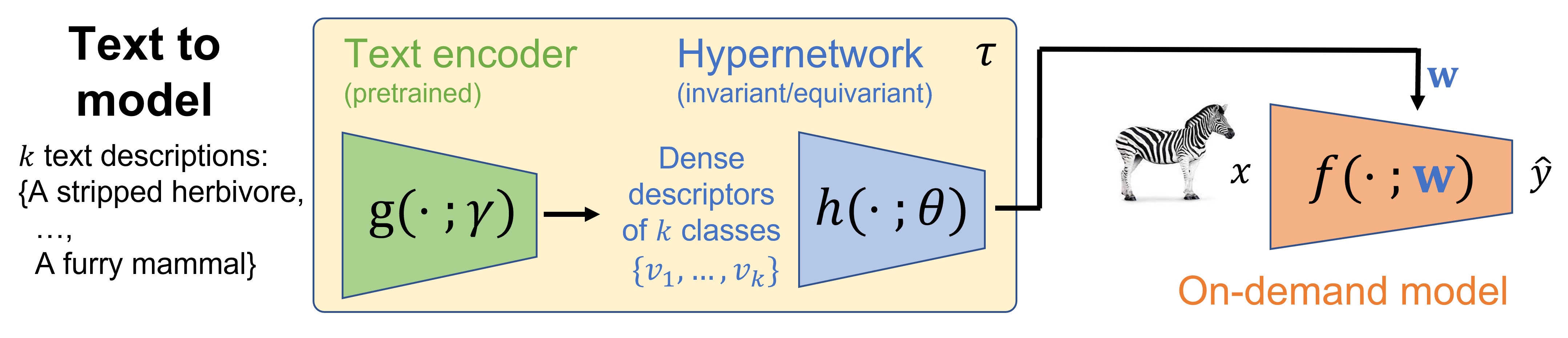}
    \caption{The text-to-model learning problem and our architecture. Our model (yellow box) receives a set of class descriptions as input and outputs weights $w$ for a downstream on-demand model (orange).  The model has two main blocks: A pretrained text encoder and a hypernetwork that obeys certain invariance and equivariance symmetries. The hypernetwork receives a set of dense descriptors to produce weights for the on-demand model. } 
    \label{fig:architecture}
\end{figure*}

We put the versatility of T2M-HN to the test across an array of zero-shot classification tasks, spanning diverse data types including images, 3D point clouds, and 3D skeletal data for action recognition. Our framework exhibits a remarkable ability to incorporate various forms of class descriptions including long and short texts, as well as class names. Notably, T2M-HN surpasses the performance of previous state-of-the-art methods in all of these setups.

Our paper offers four key contributions: (1) It identifies limitations of existing ZSL methods that rely on fixed representations and distance-based classifiers for text and image data. It proposes task-dependent representations as an alternative; (2) It introduces the Text-to-Model (T2M) approach for generating deep classification models from textual descriptions; (3) It investigates equivariance and invariance properties of T2M models and designs T2M-HN, an architecture based on HNs that adheres to the symmetries of the learning problem; and (4) It shows T2M-HN's success in a range of zero-shot tasks, including image and point-cloud classification and action recognition, using diverse text descriptions, surpassing current leading methods in all tasks.

\section{Related work}
\label{sec:related-work}
In this section we cover previous approaches to leverage textual description to classify images of unseen classes.

\textbf{Zero-shot learning (ZSL).}
The core challenge in ZSL lies in recognizing images of unseen classes based on their semantic associations with seen classes. This association is sometimes learned using human-annotated attributes \cite{li2019rethinking, song2018transductive, morgado2017semantically, annadani2018preserving}. 
Another source of information for learning semantic associations is to use textual descriptions. Three main sources were used in the literature to obtain text descriptions of classes: (1) Using class names as descriptions \cite{zhang2017learning, frome2013devise, changpinyo2017predicting,cheraghian2022zero-PCZS}; (2) using encyclopedia articles that describe the class \cite{lei2015predicting, elhoseiny2017link, qin2020generative, bujwid2021large, paz2020zest, zhu2018generative}; and (3) providing per-image descriptions manually annotated by domain experts \cite{reed2016learning,patterson2012sun,wah2011caltech-CUB}. These can then be aggregated into class-level descriptions. 

\textbf{Shared space ZSL.}
One popular approach to ZSL is to learn a joint visual-semantic representation, using either attributes or natural text descriptions. 
Some studies project visual features onto the textual space \cite{frome2013devise, lampert2013attribute, xie2021vman}, others learn a mapping from a textual to a visual space \cite{zhang2017learning, Pahde_2021_WACV}, and some project both images and texts into a new shared space \cite{akata2015evaluation,atzmon2018probabilistic,sung2018learning, zhang2015zero,atzmon2019adaptive,atzmon2020causal, samuel2021generalized,xie2021cross, radford2021learning-CLIP}. Once both image and text can be encoded in the same space, classifying an image from a new class can be achieved without further training by first encoding the image and then selecting the nearest class in the shared space. 
In comparison, instead of nearest-neighbour based classification, our approach is learned in a discriminative way.

\textbf{Generation-based ZSL.} 
Another line of ZSL studies uses generative models like GANs to generate representations of samples from unseen classes \cite{elhoseiny2019creativity, jha2021imaginative}. 
Such generative approaches have been applied in two settings. Some studies assume they have access to test-class descriptions (attributes or text) during model training. Hence, they can train a classifier over test-class images, generated by leveraging the test-class descriptions \cite{liu2018generalized,schonfeld2019generalized,han2021contrastive}. 
Other studies assume access to test-class descriptions only at test time. Hence, they map the test-class descriptions to the shared space of training classes and apply a nearest-neighbor inference mechanism. In this work, we assume that any information about test classes is only available at test time. As a result, ZSL methods assuming train-time access to information about the test classes are beyond our scope.\footnote{While these algorithms could in principle be re-trained when new classes are presented at test-time (e.g. in a continual learning \cite{ring1994continual} setup), this would result in costly and inefficient inference mechanism, and possibly also in catastrophic forgetting \cite{mccloskey1989catastrophic}. We hence do not include them in our experiments.} Yet, works assuming only test-time access to test-class information form some of our baselines \cite{elhoseiny2019creativity, jha2021imaginative}.

\textbf{Hypernetworks} (HNs, \citet{ha2016hypernetworks}) were applied to many computer vision and NLP problems, including 
ZSL~\cite{yin2022sylph}, federated learning~\cite{amosy2024late}, 
domain adaptation \cite{volk2022example}, language modeling \cite{suarez2017language}, machine translation \cite{platanios2018contextual} and many more. 
Here we use HNs for text-based ZSL. 
The work by \citet{lei2015predicting} also predicts model weights from textual descriptions, but differs in two key ways. (1) They learn a constant representation of each class; our method uses the context of all the classes in a task to predict data representation. (2) They predict weights of a linear architecture; our T2M-HN applies to deeper ones. 

\textbf{Large vision-language models (LVLM)} 
CLIP \cite{radford2021learning-CLIP}, BLIP2 \cite{li2023blip} and GPT4Vision show remarkable zero-shot capabilities for vision-and-language tasks.
A key difference between those approaches and this paper is that CLIP and BLIP2 (the training approach of GPT4Vision remains undisclosed) were trained on \textit{massive multimodal data}. In contrast, our approach leverages the semantic compositionality of \textit{language models}, without requiring paired image-text data. 
Furthermore, such large models are costly in both training and inference. They demand substantial resources, time and specialized knowledge that is not accessible to most of the research community.
We successfully applied T2M-HN in domains lacking large multimodal data, such as 3D point cloud object recognition and skeleton sequence action recognition.
The drawback is that the T2M-HN representation might react to language differences that don't matter for visual tasks. 
\section{Problem formulation}
\label{sec:formulation}

Our objective is to learn a mapping $\tau$ from a set of $k$ natural language descriptions into the space of a $k$-class image classifier. Here, we address the case where the architecture of the downstream classifier is fixed and given in advance, but this assumption can be relaxed as in \citet{litany2022federated}.

Formally, let $S^k = \{s_1,\ldots, s_k\}$ be a set of $k$ class descriptions drawn from a distribution $\mathcal{P}_k$, where $s_j$ is a text description of the $j^{th}$ class. 

Let $\tau$ be a model parameterized by a set of parameters $\phi$. It takes the descriptors and produces a set of parameters $W$ of a $k$-class classification model $f(\cdot;W)$. Therefore, we have $\tau_{\phi}: \{s_1,\ldots s_k\} \rightarrow \reals^d$, where $d$ is the dimension of $W$, that is, the number of parameters of $f(\cdot;W)$, and we denote $W = \tau_{\phi}(S^k)$.

Let $l:\mathcal{Y}\times \mathcal{Y} \rightarrow \reals^+$ be a loss function, and let $\{\xvec_i, y_i\}_{i=1}^n$ be a labeled dataset from a distribution $\mathcal{P}$ over $\mathcal{X} \times \mathcal{Y}$. For $k$-class classification, $\mathcal{Y} = \{1,\ldots,k\}$.
We can explicitly write the loss in terms of $\phi$ as follows.  
\begin{equation}
    \begin{split}
    l\left(y_i, \hat{y_i}\right) &= l\left(y_i, f(x_i;W)\right) \\
    &= l\left(y_i,f(x_i;\tau_{\phi}(S^k)) \right). 
    \end{split}
\end{equation}
See also Figure \ref{fig:architecture} and note that $\tau = h \circ g$. The goal of T2M is to minimize 

\begin{equation}
    \begin{split}
    &\phi^* = \\
    &\argmin_\phi \mathbb{E}_{S^k\sim \mathcal{P}^k} \mathbb{E}_{(x,y)\sim\mathcal{P}} \left[ l\left(y,f(x;\tau_{\phi}(S^k)) \right)\right].
    \end{split}
\end{equation}

The training objective becomes 
\begin{equation}
    \phi^* \!=\!\argmin_\phi \sum_j \sum_i l\left(y_i,f(x_i;\tau_{\phi}(S^{k_j})) \right), 
\end{equation}
where the sum over $j$ means summing over all descriptions from all sets in the training set.

\section{Our approach}
\label{sec:method}
We first describe our approach, based on HNs. We then discuss the symmetries of the problem, and an architecture that can leverage these symmetries. 

We propose to address the T2M problem, using an HNs. 
An HN is a model that outputs the weights of another model \cite{ha2016hypernetworks}. In our case, it receives a set of textual descriptions of classes to be recognized, and outputs the weights of a classifier that can discriminate them. Figure \ref{fig:architecture} illustrates our architecture. It has two components. First, a text encoder $g$ takes natural language descriptions and transforms them into dense descriptors; and second, an HN $h$  takes these dense descriptors and emits weights for a downstream classifier. 
In this paper, we do not impose any special properties on the text encoder $g$. It can be any model trained using language data (no need for multi-modal data).

\subsection{Symmetries of the T2M problem}  
Interestingly, the T2M setup imposes certain invariance and equivariance properties. Design an architecture that takes them into account can improve generalization. We now discuss these properties and then derive an architecture that captures them.

\paragraph{Equivariance properties of the classifier layer.} 
As an illustrative example, consider a downstream multi-class classifier $f_1$, that is designed to distinguish \textit{cats} from \textit{dogs}, and another classifier $f_2$, designed to distinguish \textit{dogs} from \textit{cats}. Intuitively, at the optimum, the two classifiers should be identical except for a switch of two weight vectors at the last layer  ($w_1$ in $f_1$ equal to $w_2$ in $f_2$). This has an important implication for the hypernetwork. Any permutation applied to its input class descriptions should be reflected in a parallel ordering of the weight vectors that it produces.

Consider a downstream multiclass deep classifier whose last (classification) layer has a weight vector $w_i \in \reals^m$ for the output class $i$. 
The weight matrix of the last layer is $W_{last}=\{w_1,\dots, w_k\}$ (See Figure \ref{fig:equivariance}a). 

The HN receives $k$ class descriptors and outputs their corresponding weights 
\begin{equation}
\begin{split}
    W_{last} &= \{w_1,\ldots,w_k\} \\
    &=R_{last}(\tau_{\theta}\left(\{s_1,\ldots,s_k\}\right)),
\end{split}
\end{equation}
where $R_{last}$ is a function that takes the output of $\tau$ and resizes the last $k*m$ elements to the matrix $W_{last}$. 
If the input descriptions are permuted by a permutation $\mathcal{P}$ the columns of the last layer weight should be permuted accordingly: 
\begin{equation}
    \mathcal{P}(f(x;\tau_{\phi}(S^k)) = f(x;\tau_{\phi}(\mathcal{P}(S^k)).
\end{equation}
This is the equivariant property, and the HN should obey it.

\paragraph{Invariance properties of intermediate layers.} 

Considering now the layer of the downstream classifier before the last layer ($w_d$ in Figure \ref{fig:equivariance}a). We now show that using an equivariant transformation for the last layer and an invariant transformation for the penultimate layer is sufficient to ensure that the downstream classifier is equivariant to permutation over the descriptions. A similar argument holds for earlier (lower) intermediate layers.

\begin{theorem}
Let $f$ be a two-layer neural network $f(x)=W^{last}\sigma(W^{pen} x)$, whose weights are predicted by $\tau$ $[W^{last}, W^{pen}] =  \tau(S^k)$. If $\tau(S^k)$ is equivariant to a permutation $\mathcal{P}$ with respect to $W^{last}$, and invariant to $\mathcal{P}$ with respect to $W^{pen}$, then $f(x)$ is equivariant to $\mathcal{P}$ with respect to the input of $\tau(S^k)$.
See a formal proof in the Supplemental Section \ref{app:eviv-layers}.
\end{theorem}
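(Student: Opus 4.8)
The plan is to unpack the definitions of equivariance and invariance for $\tau$ and then compute $f(x; \tau_\phi(\mathcal{P}(S^k)))$ directly, showing it equals $\mathcal{P}(f(x;\tau_\phi(S^k)))$. First I would fix notation: write $\tau(S^k) = [W^{last}, W^{pen}]$, where $W^{last} \in \reals^{k \times m}$ has rows $w_1,\dots,w_k$ (one per class) and $W^{pen}$ is the penultimate weight matrix. The permutation $\mathcal{P}$ acts on the input set $S^k = \{s_1,\dots,s_k\}$ by reindexing; let $P$ denote the associated $k \times k$ permutation matrix. The hypothesis ``$\tau$ is equivariant with respect to $W^{last}$'' means that feeding $\mathcal{P}(S^k)$ to $\tau$ permutes the rows of $W^{last}$ by $P$, i.e.\ $W^{last}(\mathcal{P}(S^k)) = P\, W^{last}(S^k)$. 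The hypothesis ``$\tau$ is invariant with respect to $W^{pen}$'' means $W^{pen}(\mathcal{P}(S^k)) = W^{pen}(S^k)$.

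Next I would carry out the forward computation. The downstream network is $f(x; [W^{last}, W^{pen}]) = W^{last} \sigma(W^{pen} x)$. Evaluating on the permuted descriptors,
\begin{equation}
f(x; \tau(\mathcal{P}(S^k))) = W^{last}(\mathcal{P}(S^k))\, \sigma\!\left(W^{pen}(\mathcal{P}(S^k))\, x\right) = \bigl(P\, W^{last}(S^k)\bigr)\, \sigma\!\left(W^{pen}(S^k)\, x\right),
\end{equation}
where the second equality uses the equivariance and invariance hypotheses. Since matrix multiplication is associative, $\bigl(P\, W^{last}(S^k)\bigr)\sigma(W^{pen}(S^k) x) = P\,\bigl(W^{last}(S^k)\,\sigma(W^{pen}(S^k) x)\bigr) = P\, f(x; \tau(S^k))$. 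Finally I would note that $P$ acting on the output vector of logits is exactly the action $\mathcal{P}$ on the classifier output (the $i$-th logit, which scores class $i$, moves to the position of the permuted class), so $f(x;\tau(\mathcal{P}(S^k))) = \mathcal{P}(f(x;\tau(S^k)))$, which is the claimed equivariance of $f$ in the input of $\tau$.

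The argument is essentially a one-line matrix manipulation once the group actions are pinned down, so there is no deep obstacle. The main thing to be careful about is bookkeeping of \emph{which} group action is meant on each object: the same abstract permutation $\mathcal{P}$ acts on the input set of descriptions, on the rows of $W^{last}$, and on the output logit vector of $f$, and the statement is only meaningful if these actions are consistent. I would therefore spend a sentence or two in the formal proof making the identification $\mathcal{P} \leftrightarrow P$ explicit and checking that permuting the rows of $W^{last}$ indeed permutes the logits of $f$ in the same way — this is immediate because the $i$-th row of $W^{last}$ produces the $i$-th logit and $\sigma(W^{pen}x)$ is untouched by $\mathcal{P}$. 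A secondary minor point worth a remark is that $W^{pen}$ being \emph{invariant} (not equivariant) is exactly what is needed: if it only depended on the set of descriptors, not their order, the shared hidden representation $\sigma(W^{pen}x)$ is the same before and after permutation, so all the permutation effect is carried cleanly by $W^{last}$. The generalization to deeper $f$ (mentioned in the ``similar argument'' remark for lower layers) follows by the same reasoning applied layerwise, with every layer below the last required to be invariant.
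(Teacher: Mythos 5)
Your proof is correct and follows essentially the same route as the paper's: both arguments reduce to the observation that permuting the output logits is the same as permuting the rows of $W^{last}$ while the hidden representation $\sigma(W^{pen}x)$ is untouched because $W^{pen}$ is invariant, and then the equivariance/invariance hypotheses on $\tau$ close the loop. The only cosmetic difference is that you compute forward from $f(x;\tau(\mathcal{P}(S^k)))$ using an explicit permutation matrix $P$ and associativity, whereas the paper starts from $\mathcal{P}(f(x;\tau(S^k)))$ and expands it entrywise; your extra care in pinning down the consistency of the three group actions is a welcome clarification but not a different argument.
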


\subsection{Invariant and equivariant Architectures.} 
Given the equivariance property discussed above, we wish to design a deep architecture that adheres to those symmetries. To ensure that certain elements remain invariant permutation, they should be processed with a shared set of parameters \cite{wood1996representation, ravanbakhsh2017equivariance, maron2020learning-DSS}. In our case, we need to share the parameters that process input descriptions, so the model is equivariant to permutations of those inputs.

Figure \ref{fig:equivariance}(a) gives the high-level structure of the equivariant architecture of T2M-HN. Figure \ref{fig:equivariance}(b) shows the architecture of our equivariant layers. All inputs are fed into the same fully connected layer (vertical stripes). To take into account the context of each input, we sum all the inputs to obtain a context vector. We fed the context vector to a different fully connected layer (diagonal stripes) and add it to each one of the processed inputs. The invariant layer has a similar architecture (Figure \ref{fig:equivariance}(c)), but with additional summation over all equivariant outputs and another different fully connected layer (horizontal stripes).

Our HN uses several equivariant layers to process the input descriptions. We then use one prediction head for each layer of the output model. The last layer should be equivariant, so we use an equivariant prediction head. For the hidden layers, we use invariant layers (See Figure \ref{fig:equivariance}(a)).

\begin{figure*}[h]
    \hspace{0.25\linewidth} (a) \hspace{0.3\linewidth} (b) \hspace{0.25\linewidth} (c) \hspace{0.00\linewidth} 

    \centering
    \includegraphics[width=0.99\linewidth]{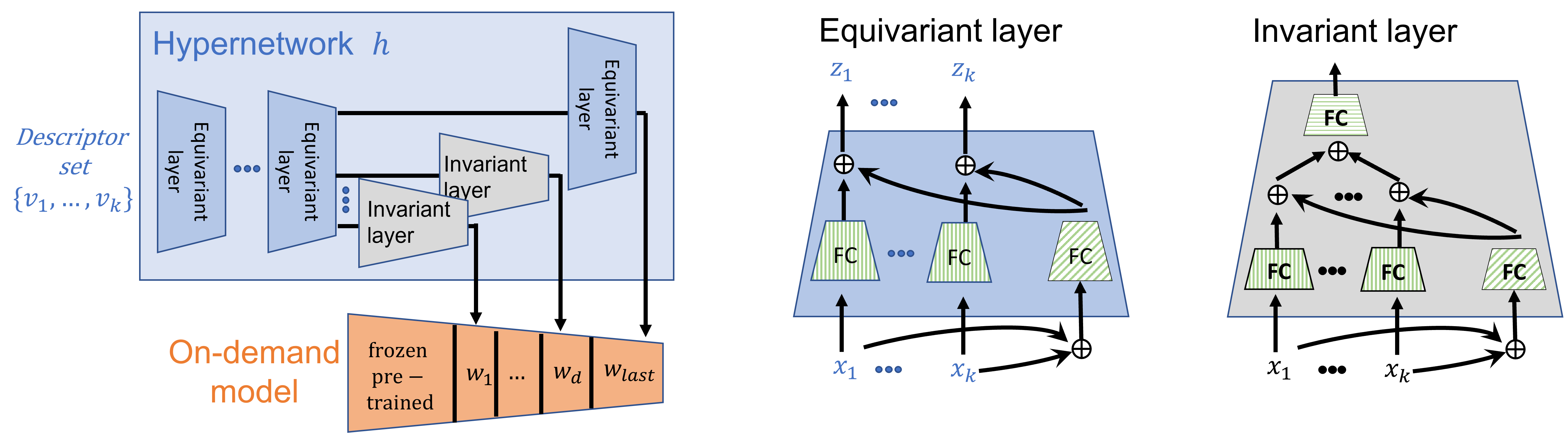}
    \caption{(a) The T2M-HN architecture for equivariant-invariant hypernetwork. The input is processed by  equivariant layers, followed by a prediction head for each layer of the target on-demand classifier $f$. The prediction head for $W_{last}$ is equivariant. Heads for earlier layers of $f$, $w_1, ...w_k$ are  invariant. (b) An architecture for the equivariant layer. Every input is processed by a fully connected (FC) layer in a Siamese manner (shared weights). Inputs are also summed and processed by a second FC layer, whose output is added back to each output. (c) An architecture for an invariant layer, following a similar structure to b.} 
    \label{fig:equivariance}
\end{figure*}

\section{Experiments}
\label{sec:experiments}
The T2M setup is about producing a model that can be applied to data from new classes. Accordingly, the model trains on data from a set of training classes, alongside their text descriptions. Then, it is tested on data from new classes, given the text descriptions of these classes. 

We evaluate T2M-HN in zero-shot classification, using
three image datasets, one 3D point cloud dataset, and one action recognition dataset. We consider various forms of text description, including single-word class labels, few-word class names, and longer descriptions that could also include negative properties (i.e. properties that the images in the class do not have). Finally, we study one-class classification based on text attributes. 
Due to space constraints, we provide a concise description of our experimental settings here. Further details can be found in Appendix \ref{app:Implementation-and-architecture}. 

\begin{table*}[t]
    \begin{minipage}[c]{0.67\linewidth}
    \scalebox{0.68}{
    \begin{tabular}{l| c c c | c c c }
        & \multicolumn{3}{c}{AWA by class name}
        & \multicolumn{3}{c}{ModelNet40 by class name}\\
         \textbf{}  & Seen   & Unseen & Harmonic & Seen   & Unseen & Harmonic\\
        
        \midrule
        CLIP & $98.9 \pm 0.2$ & NA & NA & NA & NA & NA\\
        BLIP2 & $99.6 \pm 0.1$ & NA & NA & NA & NA & NA\\
        GPT4Vision & $100 \pm 0.0$ & NA & NA & NA & NA & NA\\
        
        \midrule
        DeViSE & $78.1 \pm 1.0$ & $58.9 \pm 1.4$ & $67.2 \pm 1.9$
        & $83.6 \pm 2.7$ & $58.6 \pm 3.4$ & $68.9 \pm 3$\\
        DEM & $83.1 \pm 1.6$ & $75.1 \pm 1.2$ & $78.9 \pm 2.0$
        &$86.7 \pm 2.4$ & $57.3 \pm 3.3$ & $69.0 \pm 2.8$\\
        CIZSL & $97.0 \pm 0.1 $& $74.7 \pm 3.2$ & $84.20 \pm 2.0$
        &$97.6 \pm 0.6$& $50.1 \pm 3.6$&$66.3 \pm 3.3$ \\
        GRaWD & $96.9 \pm 0.1$ & $81.6 \pm 1.9$ & $88.6 \pm 1.1$
        &$97.8 \pm 0.5$& $52.8 \pm 3.3$&$68.3 \pm 2.8$ \\
        ZSML & $96.1 \pm 1.0$ & $80.4 \pm 2.4$ & $87.5 \pm 1.5$
        & $90.2 \pm 1.5$ & $68.6 \pm 4.5$ & $77.8 \pm 3.0$\\
        T2M-HN {\footnotesize (ours)} & $\mathbf{98.9 \pm 0.1}$ & $\mathbf{87.3 \pm 0.2}$ & $\mathbf{92.7 \pm 0.1}$
        & $\mathbf{97.9 \pm 0.1}$ & $\mathbf{75.1 \pm 0.9}$ & $\mathbf{85.0 \pm 0.4}$ \\

        \bottomrule
    \end{tabular}
    }   
    \end{minipage}\hfill
    \begin{minipage}[c]{0.25\linewidth}
    \caption{\textbf{Classification by single-word class names}. Accuracy on seen and unseen classes for AWA and ModelNet-40. Values are averages and SEM across all class pairs. LVLM have encountered all unseen classes, and cannot be applied to point clouds, hence marked as NA.
    }
    \label{tab:class_name}
    \end{minipage}\hfill
    \vspace{-5pt}  
\end{table*}

\paragraph{Baselines:}
We compare our T2M-HN with five text-based zero-shot approaches for image recognition:
(1) DEVISE \cite{frome2013devise} projects images to a pre-trained language model space by adding a projection head to a pre-trained visual classification model;
(2) Deep Embedding Model (DEM) \cite{zhang2017learning-DEM} uses the visual space as the shared embedding space; 
(3) CIZSL \cite{elhoseiny2019creativity} trains conditional GANs with a loss designed to generate samples from unseen classes without synthesizing unrealistic images.  At inference time, the GAN is conditioned on test descriptions, generates synthetic image representations, and test images are classified using kNN w.r.t. to the synthetic images;
(4) GRaWD \cite{jha2021imaginative} trains a conditional GAN with a loss that helps to reach regions in space that are hard to classify as seen classes; and
(5) ZSML \cite{verma2020meta} combines meta-learning with a WGAN, to generate samples from unseen classes, and use them to train a classifier at test time.
When relevant, we also computed the performance obtained when using CLIP, BLIP2, and GPT4Vision. Note that those models were trained using massively large datasets, so it is reasonable to assume they have seen all classes studied here. This is hence not zero-shot classification, and the results can be viewed as a ``skyline" value that zero-shot approaches should aim at.

\textbf{Datasets:}
We experiment with three image datasets: \textbf{(1) Animals with attributes} (AWA) \cite{lampert2009learning-AwA}; 
\textbf{(2) SUN} \cite{patterson2012sun};  and 
\textbf{(3) CUB} \cite{wah2011caltech-CUB}; a 3D point-clouds dataset: \textbf{(4) ModelNet40} \cite{wu20153d-modelnet40}; and an action recognition dataset: 
\textbf{(5) BABEL 120}\cite{babel}, containing sequences of body skeletons.

\textbf{Experimental protocol:}
We split the data in two dimensions: Classes and samples. For standardized comparisons the splitting classes into \textit{seen classes} used for training and \textit{unseen classes} used in evaluation. 
For each seen class we split out a set of evaluation images that are not presented during training, and used to evaluate the model on the seen classes. We stress that  "Seen" in our tables means \textit{novel images} from \textit{seen classes}. 

\textbf{Workflow:}
When training the whole architecture, we split the train seen classes. 80\% of the classes were used for training the backbone. 
Then, we froze the weights of the backbone and use the remaining 20\% to train the HN. This way, the HN learns to generalize to new classes. 
Finally, we evaluate the entire architecture on the evaluation split of the seen classes, and on the unseen classes.
At test time, the model receives $k$ class descriptions and predicts a model to classify images drawn from the corresponding $k$ classes. Unless otherwise specified, we experiment with the value of $k=2$.

\subsection{ZSL using class names: Images and 3D point clouds }
\label{sec:zs}

In the following experiment, we evaluate T2M-HN under two tasks: Zero-shot image classification and zero-shot 3D point clouds classification. We use single-word class names for both tasks as the textual class descriptions. 

\textbf{Results:} 
Table \ref{tab:class_name} shows our model reaches the highest accuracy in both experimental setups and datasets.

\subsection{ZSL using text descriptions: Images and sequences of 3D skeletons}
\label{sec:zs_des}

Next, we evaluate T2M-HN when using richer text descriptions: \textbf{(1) For SUN}, we use short class descriptions provided by the original dataset. Specifically, SUN includes many multi-word class names like ``parking garage indoor'' or ``control tower outdoor''. 
\textbf{(2) For BABEL 120} we use the action names provided by the original dataset. Many of the actions have multi-word, descriptive names such as \textit{``take of bag''}.
\textbf{(3) For AwA}, we use synthetic class descriptions generated by a GPT model. See detailed examples in the Appendix \ref{app:gpt-examples}. We will publish the full set of descriptions for reproducibility. 
\textbf{(4) For CUB}, we use the descriptions of each image in a given class as a possible description of the class. 

In the CUB dataset, bird species from the same taxonomic family are harder to distinguish from each other than random pairs of species \cite{vedantam2017context}. We used the Datazone dataset of bird species \cite{birdlife2022} and annotated each species with its corresponding taxonomic family. Based on this information, we defined pairs of bird species from two different families as \textit{easy} and pairs from the same family as \textit{hard}.

\begin{table*}[h!]
    \setlength{\tabcolsep}{2pt}
    \centering
    \scalebox{0.77}{
    \begin{tabular}{l| c c c | c c c | c c c} 
        & \multicolumn{3}{c}{SUN by short description}
        & \multicolumn{3}{c}{BABEL by short descriptions }
        & \multicolumn{3}{c}{AWA by GPT descriptions }
        \\
         \textbf{}  & Seen   & Unseen & Harmonic & Seen   & Unseen & Harmonic & Seen & Unseen & Harmonic\\
                 \midrule
        CLIP & $99.1 \pm 0.4$ & NA & NA
        &NA&NA&NA
        & $93.7 \pm 0.2$ & NA & NA\\

        BLIP2 & $98.9 \pm 0.1$ & NA & NA
        &NA&NA&NA
        & $92.1 \pm 0.4$ & NA & NA\\

        GPT4Vision & $99.8 \pm 0.2$ & NA & NA
        &NA&NA&NA
        & $97.5 \pm 0.5$ & NA & NA\\

        \midrule
        DeViSE &$52.0 \pm 1.4$& $58.9 \pm 1.1$& $55.2 \pm 0.9$ 
        &$65.9 \pm 4.4$& $51.1 \pm 2.0$& $57.6 \pm 2.8$ 
        & $91.8 \pm 1.6$ & $70.0 \pm 3.7$ & $79.4 \pm 2.2$  \\
        DEM & $83.2 \pm 1.1$ & $83.2 \pm 1.4$ & $83.2 \pm 0.9$
        & $56.6 \pm 2.4$ & $50.2 \pm 1.1$ & $53.2 \pm 1.5$
        & $93.9 \pm 1.2$ & $73.0 \pm 3.3$ & $82.1 \pm 1.8$  \\

        CIZSL  &$94.0 \pm 0.1$& $80.3 \pm 0.6$& $86.6 \pm 0.3$ &
        $82.7 \pm 2.1$& $62.5 \pm 1.3$& $71.2 \pm 1.2$ &
        $96.6 \pm 0.1 $& $80.7 \pm 2.2$ & $87.9 \pm 1.3$\\
        GRaWD & $95.5 \pm 0.1$& $84.7 \pm 0.5$& $89.8 \pm 0.3$ &
        $83.7 \pm 1.8$& $62.2 \pm 1.1$& $71.3 \pm 1.0$ &
        $96.8 \pm 0.1$ & $81.1 \pm 0.2$ & $88.3 \pm 1.2$ \\
        ZSML & $\mathbf{96.9 \pm 0.1}$& $85.5 \pm 0.4$& $90.8 \pm 0.2$ 
        &$52.6 \pm 1.3$ & $51.2 \pm 0.9$&$51.9 \pm 1.1$ 
        & $97.4 \pm 0.5$ & $72.3 \pm 2.7$ & $82.9 \pm 1.8$  \\
        T2M-HN {\footnotesize (ours)} & $95.8 \pm 0.1$& $\mathbf{88.4 \pm 0.1}$& 
        $\mathbf{92.0 \pm 0.1}$
        & $\mathbf{95.3 \pm 0.1}$& $\mathbf{77.6 \pm 0.1}$& 
        $\mathbf{85.5 \pm 0.1}$
        & $\mathbf{98.7 \pm 0.1}$&  $\mathbf{83.3 \pm 0.1}$& $\mathbf{90.3 \pm 0.1}$ \\
        \bottomrule
    \end{tabular}
    }
    \caption{\textbf{Classification using short and rich class descriptions}. Values are the mean ($\pm$ s.e.m) accuracy 
    averaged over 100 random class pairs (for SUN and BABEL 120) and all class pairs (for AwA). LVLM have encountered all unseen classes, and cannot be applied to 3D skeletons, hence marked as NA.
    }
    \label{tab:descriptions}
\end{table*}

\begin{figure}[!ht]
    \centering
    \includegraphics[trim={5cm 4cm 6cm 0cm},clip, width=0.48\textwidth]{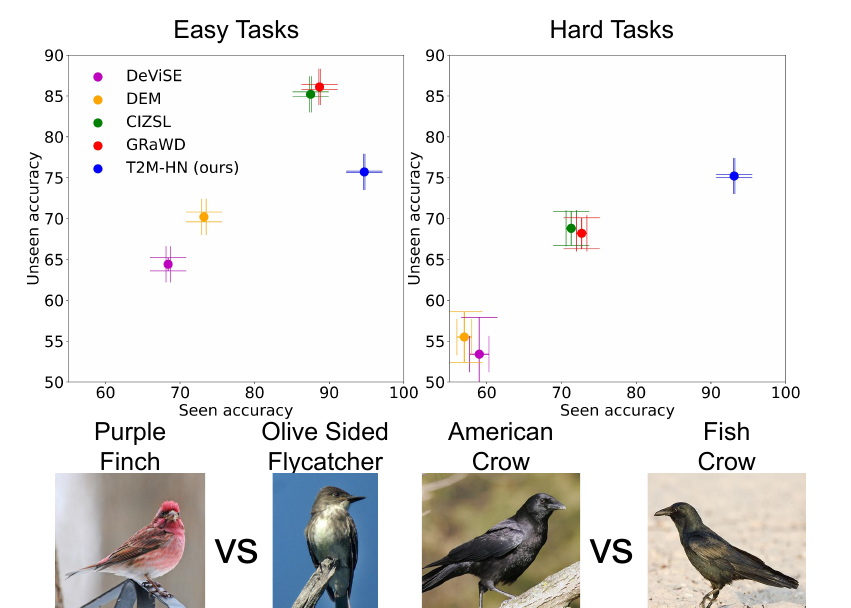}
    \caption{\textbf{Classifying easy and hard pairs of bird species from the CUB dataset}. Easy tasks involve binary classification of bird pairs from different taxonomy families. Hard tasks classify bird pairs within the same taxonomy  family. Mean accuracy is shown for images from both seen (x-axis) and unseen (y-axis) classes, averaged across all pairs.}
    \label{fig:cub_results} 
\end{figure}

\textbf{Results:} 
Table \ref{tab:descriptions} presents the classification accuracy obtained using class descriptions, for the AWA, SUN, and BABLE datasets. T2M-HN outperforms all baselines.
Figure \ref{fig:cub_results} shows the results for the CUB dataset with easy and hard tasks.
To better understand the results, consider an important distinction between our approach and previous  \textit{shared-representation} approaches. These approaches aim to learn class representations that would generalize to new classification tasks. In contrast, our approach aims to build task-specific representations and classifiers. For easy tasks, task-dependent representation may not be important because the input contains a sufficient signal for accurate classification. In contrast, in hard tasks, a model would benefit from task-dependent representation to focus on the few existing discriminative features of the input examples. Indeed, as demonstrated in Figure \ref{fig:cub_results}, in the easy tasks, although our model is superior on the seen classes, it is outperformed by the GAN-based baselines on unseen classes. In contrast, for the hard tasks, where task-specific class representation is more valuable, our model is superior on both seen and unseen classes.

\begin{figure}[t!]
    \centering
    \includegraphics[width=0.75\linewidth]{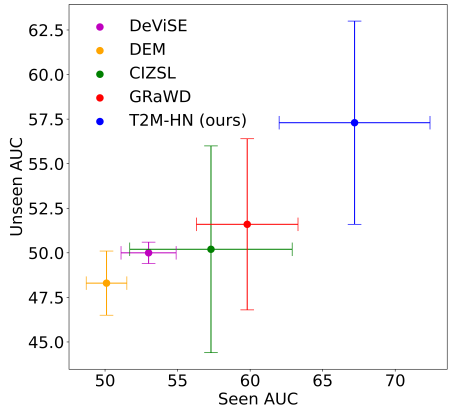}
    \caption{AUC of seen and unseen classes, in a one class task that crosses species boundaries: "\textit{Animals that have horns}". Shown are averages over 53 attributes.
    }
    \label{fig:one_class} 
\end{figure}

\begin{table*}[t]
    \begin{minipage}[c]{0.75\linewidth}
    \scalebox{0.78}{
    \begin{tabular}{l| c c c | c c c } 
        & \multicolumn{3}{c}{Negative descriptions}
        & \multicolumn{3}{c}{Negative \& positive descriptions}\\
         \textbf{AWA data}  & Seen   & Unseen & Harmonic & Seen   & Unseen & Harmonic\\
         \midrule
        CLIP & $19.9 \mypm 2.2$ & NA & NA & $56.8 \mypm 2.9$ & NA & NA \\
        BLIP2 & $73.9 \mypm 0.6$ & NA & NA & $50.1 \mypm 0.7$ & NA & NA \\
        GPT4Vision & $27.6 \mypm 0.9$ & NA & NA & $54.1 \mypm 0.9$ & NA & NA \\

        \midrule
         DeViSE &$57.3 \mypm 4.9$ & $ 54.5 \mypm 5.2$ & $55.9 \mypm 5.0$ &
        $ 79.5 \mypm 3.6 $ & $ 61.5 \mypm 4.5 $ & $ 69.4 \mypm 4.0 $\\
        DEM &$81.7 \mypm 1.2$&$73.7 \mypm 1.6$&$77.5 \mypm 1.0$&
        $78.2 \mypm 1.7$ & $69.1 \mypm 1.6$ & $73.4 \mypm 1.2$\\
        CIZSL  &$58.3 \mypm 0.8 $& $56.6 \mypm 3.4$ &$ 57.5 \mypm 1.8$&$93.9 \mypm 0.2$& $71.6 \mypm 2.3$&$81.2 \mypm 1.5$ \\
        GRaWD  &$54.9 \mypm 0.8$&$56.0 \mypm 3.2$ & $55.3 \mypm 1.6 $ &$95.0 \mypm 0.2$& $73.9 \mypm 2.0$&$83.2 \mypm 1.5$\\
        T2M-HN{\footnotesize(ours)} &$\mathbf{90.0 \mypm 0.2}$&$\mathbf{77.1 \mypm 0.3}$&$\mathbf{83.0 \mypm 0.2}$ 
        & $\mathbf{96.6 \mypm 0.2}$ & $\mathbf{82.9 \mypm 0.2}$ & $\mathbf{89.2 \mypm 0.1}$ \\
        
        \bottomrule
    \end{tabular}
    }   
    \end{minipage}\hfill
    \begin{minipage}[c]{0.25\linewidth}
    \caption{\textbf{Classification using negative descriptions}. Mean accuracy for images from seen and unseen AwA classes, averaged over all class pairs. LVLMs, trained on extensive datasets, likely encountered all unseen classes, hence marked as NA.}
    \label{tab:negative}
    \end{minipage}\hfill
    \vspace{-5pt}  
\end{table*}

\subsection{Descriptions with negative terms}
\label{sec:negatives}
To this point, we have assumed that the descriptions correspond to properties of the class. However, descriptions could also state which properties the class does \textbf{not} have. For example, one may want to classify animals that ``do not live in the water", or animals that ``do not fly".
To create such negative descriptions for the AwA data, we used the list of attributes provided for each class in AwA. For each class, we randomly sampled 4 attributes that do not apply to that class. 

\textbf{Results:} 
Table \ref{tab:negative} shows our findings for two scenarios: purely negative descriptions (left side) and balanced positive and negative descriptions (right side), maintaining equal training and testing ratios for both scenarios.

T2M-HN outperforms all baselines by significant gaps. Presumably, the best baseline, GRaWD, which generates image features from the textual descriptions, fails to generate proper images given negative attributes. Interestingly, LVLM performance significantly drops in these scenarios, likely because these models were trained on image captions that seldom include negative descriptions.

\subsection{Identifying complex classes membership}
\label{sec:set-of-classes}
Typically, zero-shot classification involves distinguishing ``natural categories" \cite{rosch1973natural} like \textit{``cats"} and \textit{``dogs"}. 
However, We may want to generate classifiers that follow more complex class boundaries, aggregating over multiple natural classes. For instance, 
\textit{``animals with horns"} combine several classes from a rhino to a deer. 

To test T2M-HN in this scenario, we created a set of one-class classification tasks designed to recognize images based on properties that cut through class boundaries.
To make the evaluation systematic, we used attributes from AwA, and eliminate non-visual attributes. Details of the protocol are given in Appendix \ref{app:supersets}. 
We report the average Area Under the Recall-Precision Curve over seen classes and unseen classes.

\textbf{Results:}
Figure \ref{fig:one_class} shows that T2M-HN captures the complex semantic distinctions of our task better than baselines. We attribute this to its ability to draw new classifiers for each new description. 

\subsection{T2M-HN classifiers are task-specific}
\label{sec:grad_cam}

Leading text-based ZSL methods map class descriptions or images to a shared representation, but that mapping is constant for all classification tasks.
Our T2M-HN is designed to use information about the classes of each specific classification task.

We use GradCam \cite{selvaraju2017grad} and examine what image areas are used in different classification tasks. Figure \ref{fig:gradcam} explores two such examples. The upper three panels show the image regions that are used for classifying the image as a \textit{Dolphin}.
When classifying dolphin vs. deer, the model gives most of its weight to the background (ocean water and waves), which is reasonable since an image of a deer probably will not contain those elements in the background. However, when classifying dolphin vs. killer whale, the model gives most of its weight to the dolphin itself, since the background of a dolphin image may be similar to the background of a whale image.

\begin{figure}[t!]
    \includegraphics[width=0.95\linewidth]{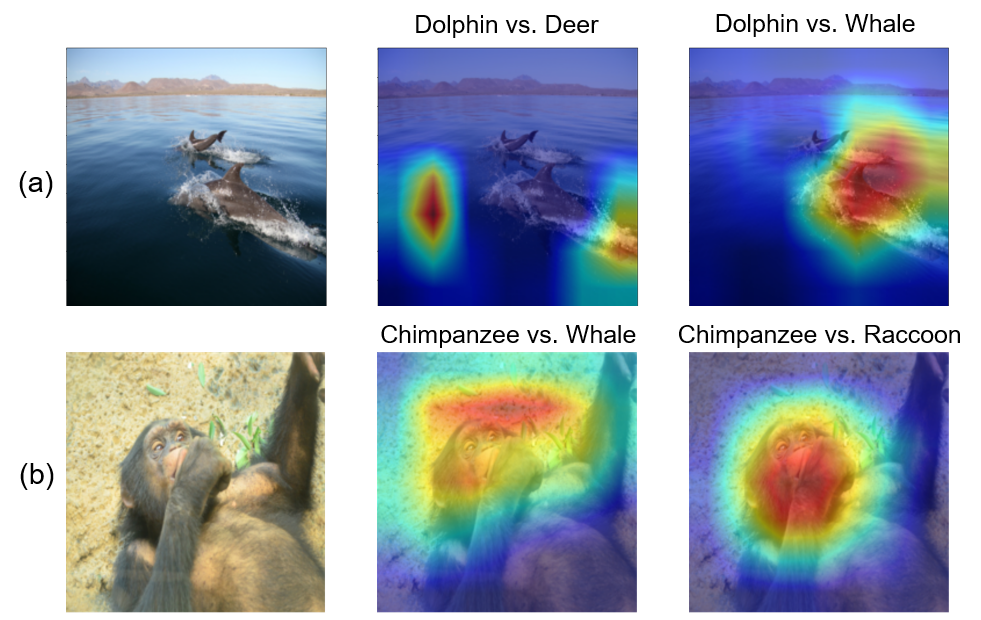} 
    \caption{Class context affects the predicted classifier. \textbf{Top left:} An image of a dolphin. \textbf{Top middle:} gradcam heat map when classifying the dolphin image using a model trained for \textit{dolphin} vs \textit{deer}: The model is strongly affected by the background ocean water, presumably because the negative class lives on land. \textbf{Top right:} Recognition using a model for \textit{dolphin} vs. \textit{killer whale}: the model attends to the dolphin, since background would be similar for both classes. \textbf{Bottom:} A similar effect for a chimpanzee.}
    \label{fig:gradcam}
\end{figure}

\section{Conclusion}
We introduced the T2M learning algorithm, a novel approach that generates an image recognition classifier ``on demand'' using only class descriptions provided at test time. T2M allows for task-dependent class representations rather than fixed ones. We analyzed the group symmetries a T2M model must adhere to and introduced T2M-HN, a model based on HNs that obeys these symmetries. Through extensive experiments across various classification scenarios—including images, 3D point clouds, and action recognition—we explored the adaptability of the model to descriptions of differing complexities, from single and few-word class names, through long text descriptions, all the way to ``negative" and attribute descriptions. Our results clearly demonstrate the potential of the T2M modeling approach.

\section{Limitations}
\textbf{Non-Visual descriptions} 
Our objective is to classify images belonging to previously unseen classes by leveraging textual descriptions. Nevertheless, it is noteworthy that textual descriptions may occasionally encompass non-visual attributes, that may mislead the model to look for irrelevant features. Due to this potential challenge, we tested our approach in similar challenges like negative descriptions (\S\ref{sec:negatives}) and complex classes membership (\S\ref{sec:set-of-classes}). 

\textbf{Hypernetwork Training Insights} 
The proposed architecture is based on hypernetworks, which are generally considered more challenging to train efficiently than standard neural networks. For instance, training a hypernetwork is probably less stable compared to training classical convolutional neural networks. However, our inner optimization search reveals that numerous parameter combinations yield satisfactory outcomes. This success is likely attributable to the entire system utilizing a uniform supervision signal through a single cross-entropy objective.

\bibliography{cite}

\appendix

\section{Overview of evaluation datasets and tasks.}
\label{app:overview-table}
We evaluate the versatility of T2M-HN across a spectrum of zero-shot classification tasks, encompassing different data types such as images, 3D point clouds, and 3D skeletal data for action recognition (see Table \ref{tab:datasets}). Our framework demonstrates a remarkable capability to assimilate diverse forms of class descriptions, including both long and short texts, as well as class names. Importantly, T2M-HN outperforms previous state-of-the-art methods in all of these experimental setups.

\begin{table*}[t!]
    \centering
    \scalebox{0.8}{
    \setlength{\tabcolsep}{3pt} %

\small
\begin{tabular}{|c|c|c|l|}
    \toprule 
    \textbf{Dataset}   &  \textbf{Sample} &  \textbf{Description } & \textbf{Example } 
    \\ 
    \textbf{name and type}   &  \textbf{ data } &  \textbf{ type} & \textbf{description} 
    \\ \midrule

    \begin{tabular}{c} ModelNet-40 \cite{wu20153d-modelnet40} \\3D Point Clouds \\ CAD models \end{tabular}
    & \begin{tabular}{c}\includegraphics[trim={4cm 4cm 3cm 4cm},clip, width=0.15\textwidth]{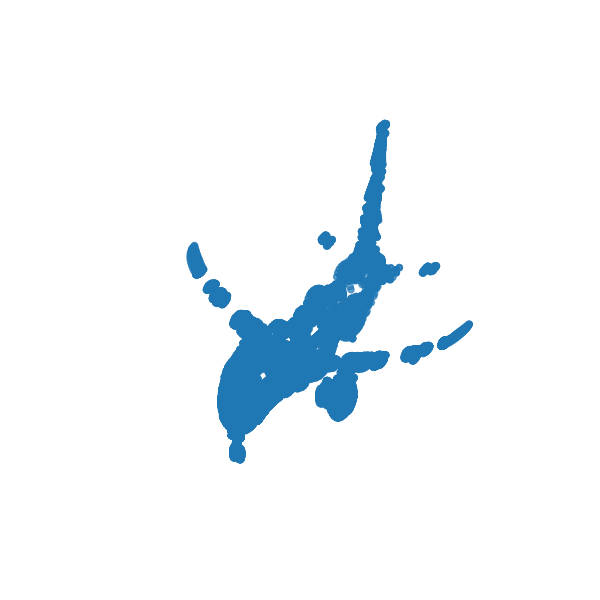} 
    \includegraphics[trim={4cm 3cm 4cm 3cm},clip, width=0.15\textwidth]{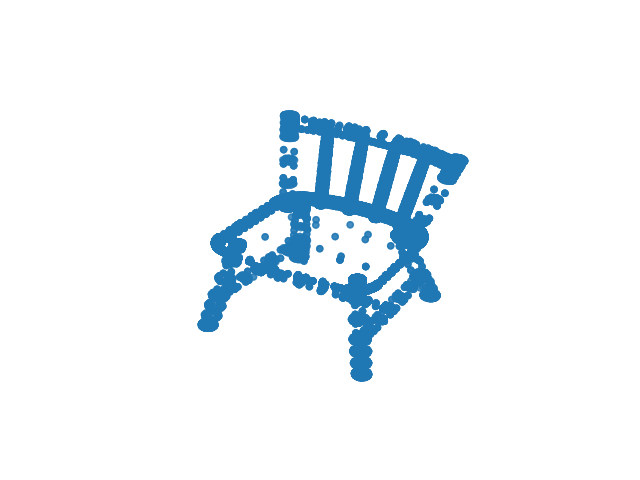}\end{tabular}
    & \begin{tabular}{c} Class \\ name\end{tabular}           
    & \begin{tabular}{l}(1) \textit{Airplane} \\ (2) \textit{Chair} \end{tabular}

    \\ \midrule 
    
    \multirow{4}{*}{ \begin{tabular}{c} ~ \\ AwA \cite{lampert2009learning-AwA}  \\ Animal \\ images \end{tabular}} & \multirow{4}{*}{\begin{tabular}{c}  \includegraphics[width=0.3\textwidth]{figs/dataset_examples/awa2.png} \end{tabular}} &  Class name  
    & \begin{tabular}{l}(1) \textit{Moose} \\ (2) \textit{Elephant} \end{tabular}
    \\  \cline{3-4}
    &  &\begin{tabular}{c}Long \end{tabular}  
    & \begin{tabular}{l}(1) \textit{``An animal of the deer family with humped} \\ \textit{ shoulders, long legs, and a large head with antlers.''},\\ (2) \textit{``A plant-eating mammal with a long trunk,}\\ \textit{large ears, and thick, grey skin.''} \end{tabular} 
    \\ \cline{3-4}
    &  & \begin{tabular}{c}Negative \end{tabular} & \begin{tabular}{l} (1) \textit{``An animal without stripes and not gray''},\\ (2) \textit{``An animal without fur and without horns''}\end{tabular} 
    \\ \cline{3-4}
    &  & \begin{tabular}{c} Attribute \end{tabular} & \begin{tabular}{l} (1) \textit{``Animals with fur''} \\ (2) \textit{``Animals with long trunk''} \end{tabular} 
    
    \\ \midrule
    
    \begin{tabular}{c} SUN  \cite{patterson2012sun} \\ Images of scenes \\ and places \end{tabular} 
    & \begin{tabular}{c} \includegraphics[width=0.3\textwidth]{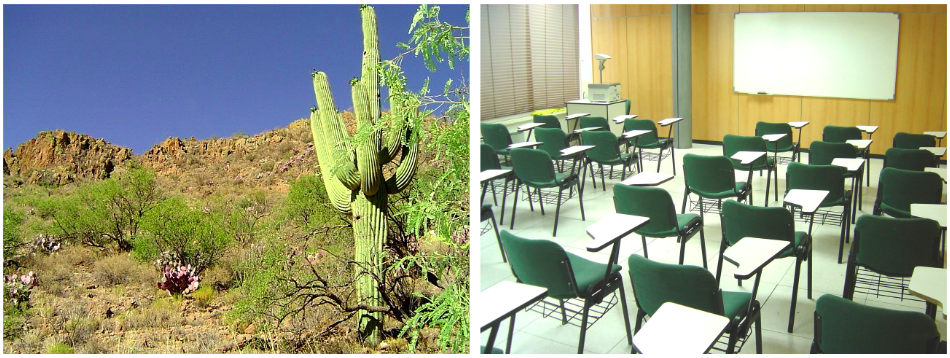} \end{tabular}
    & \begin{tabular}{c} Short \end{tabular}    
    & \begin{tabular}{l} (1) \textit{``Desert vegetation''},\\ (2) \textit{``Lecture room''}\end{tabular}
    
    \\ \midrule
    
      \begin{tabular}{c} CUB \cite{wah2011caltech-CUB} \\ Images of \\ bird species\end{tabular}
    & \begin{tabular}{c}\includegraphics[width=0.30\textwidth]{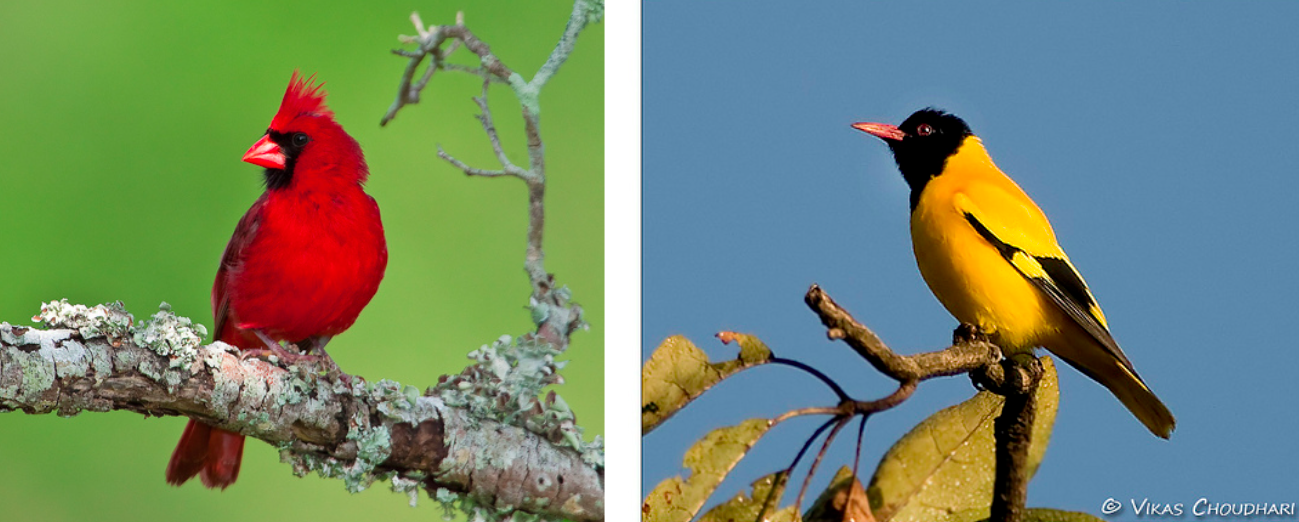}   \end{tabular}
    & \begin{tabular}{c} Long  \end{tabular}    
    & \begin{tabular}{l} (1) \textit{``This bird is red with an orange beak and black}\\ \textit{  eyes and eyebrow.''},\\ (2) \textit{``a small yellow bird with  a black } \\ \textit{chest and tail.''}\end{tabular}                      

    \\ \midrule
  
      \begin{tabular}{c} BABEL 120 \cite{babel} \\ Sequences of \\ 3D skeletal data \end{tabular} 
    & \begin{tabular}{c}\includegraphics[width=0.15\textwidth]{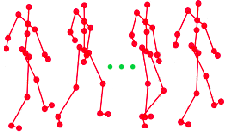} \end{tabular}
    & \begin{tabular}{c} Short \end{tabular}           
    & \begin{tabular}{l} (1) \textit{``Take off bag''}, \\ (2) \textit{``Type on a keyboard''} \end{tabular}
    
    \\ \bottomrule
    
    \end{tabular}
    }
    \caption{Overview of evaluation datasets and tasks.}
    
    \label{tab:datasets}
\end{table*}

\section{Implementation and architecture}
\label{app:Implementation-and-architecture}

\textbf{Implementation and architecture:} 
We encode single-word class names from the AwA dataset using 
Glove \cite{pennington2014glove} and longer descriptions, as well as class names, from ModelNet-40 using  SBERT \cite{reimers2019sentence-SBERT}. 
For images, the visual target model had a backbone based on a frozen ResNet-18 \cite{he2016deep-ResNet}, pretrained on ImageNet with one or two fully connected layers, predicted by the HN. For 3D point-cloud data, the backbone was PointNet \cite{qi2017pointnet}, again with one or two predicted fully-connected layers. For action recognition data, we follow \cite{babel} and use 2 stream-AGCN \cite{DBLP:conf/cvpr/ShiZCL19a}, with one or two predicted fully-connected layers as well.

For CLIP, we use the CLIP encoder followed by $k$-NN classifier in the CLIP space \cite{radford2021learning-CLIP}. For BLIP we use LoRA to tune the model to the classification task using the train split. For GPT4Vision we use the prompt to demonstrate the task, followed by the classification task from the test split.
Since we have a limited number of calls to those models we sampled classes and descriptions from the test split. We increased the sample size until the SEM was small enough to claim statistical significance.

\textbf{Experimental protocol:}
We split the data in two dimensions: Classes and samples. For standardized comparisons the splitting classes into \textit{seen classes} used for training and \textit{unseen classes} used in evaluation, follows the split used by \cite{xian2018feature} for AWA, the split of  \cite{cheraghian2022zero-PCZS} for Modelnet40 and the standard split of \cite{wah2011caltech-CUB} for CUB. Since there is no official split for SUN and BABEL, we share our random split in Section \ref{app:dataset-splits}. 
As in other ZSL protocols, for each seen class we split out a set of evaluation images that are not presented during training, and used to evaluate the model on the seen classes. For AwA, CUB, SUN and BABEL 120 we randomly selected  10\% of images for ``seen" evaluation. For ModelNet40 we use the test split  in \cite{wu20153d-modelnet40}. We stress that  "Seen" in our tables means \textit{novel images} from \textit{seen classes}. 

Our approach follows standard benchmark practices widely accepted in the field, including the use of benchmarks that may have some class overlap with ImageNet, as in many recent studies. We assess the impact of these overlaps on novelty. For ModelNet40 (point clouds) and BABEL (3D skeletal data), no pretrained ResNet is used. In the CUB dataset, only one class (Indigo Bunting) overlaps with ImageNet out of 51 unseen classes, making the effect negligible. In the SUN dataset, three classes (volcano, boathouse, palace) overlap with ImageNet out of 47 unseen classes, which we also consider negligible. However, in the AwA dataset there is significant overlap, with 5 out of 10 classes present in both ImageNet and the AwA unseen classes.
To better understand the influence on AWA results, we measure the accuracy when classifying two classes that are part of Imagenet classes, and 2 classes that are not. The results are $81.4 \pm 0.01$ and $82.2 \pm 0.09$ respectively.

\textbf{Training cost:}
Our training was completed in $\sim30$ minutes on a single 2080Ti GPU. This is faster than baselines: DEM and DEVISE require twice as long for training (1 hr), while ZSML, GRAWD and CIZSL took x4 the time (2 hrs).
This result agrees with previous literature on HNs, e.g. \cite{DBLP:conf/iclr/BrockLRW18, galanti2020modularity}.

\section{Hyperparameter optimization}
\label{app:HPO}
We tune hyperparameters using a held-out set described below. 

For the HN optimizer, we tuned the learning rate $\in \{0.001, 0.005, 0.01, 0.05, 0.1\}$,
momentum $\in \{0.1, 0.3, 0.9\}$,
weight decay $\in \{0.00001, 0.0001, 0.001, 0.1\}$, and number of HN training epochs $\in \{50,70,100\}$.

For the on-demand target model, we fixed the optimizer to have a learning rate of 0.01, momentum of 0.9 and weight decay of 0.01. We tuned the batch size $\in \{16, 32, 64, 128\}$ and the number of training epochs $\{1, 2, 3, 5, 10\}$.

We tried several sizes for the HN architecture with one hidden layer, $\{30, 50, 120, 300\}$. We also describe results with two layers in the ablation section at the supplemental Sec. \ref{app:ablation}.

Recall that we split the data across two dimensions: classes and samples. When training the backbone model, we held out 20\% of training (seen) classes for training the HN on classes the backbone does not see. From those classes, we held out images to serve as a validation set. We used those images of seen classes to evaluate the architecture performance and chose the hyperparameters based on that estimation.

\section{Equivariant and invariant layers}
\label{app:eviv-layers}

\begin{theorem}
Let $f$ be a two-layer neural network $f(x)=W^{last}\sigma(W^{pen} x)$, whose weights are predicted from descriptors $S^k = \{s_1,\ldots, s_k\}$ such that $[W^{last}, W^{pen}] =  \tau(S^k)$. If $\tau(S^k)$ is equivariant to a permutation $\mathcal{P}$ with respect to $W^{last}$, and invariant to $\mathcal{P}$ with respect to $W^{pen}$, then $f(x)$ is equivariant to $\mathcal{P}$ with respect to the input of $\tau(S^k)$.

\end{theorem}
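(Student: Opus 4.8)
The plan is to unfold the definitions of equivariance and invariance of $\tau$ and then compute $f$ directly, observing that the permutation factors out of the last linear map. First I would fix notation: let $\mathcal{P}$ act on the ordered tuple $(s_1,\dots,s_k)$ by reindexing, and let $P\in\reals^{k\times k}$ be the corresponding permutation matrix acting on the output-class index. Since the last layer has one weight vector $w_i\in\reals^m$ per output class, i.e. $W^{last}$ has $k$ rows, the hypothesis that $\tau$ is equivariant to $\mathcal{P}$ with respect to $W^{last}$ means precisely that applying $\tau$ to $\mathcal{P}(S^k)$ yields the same rows permuted by $P$; the hypothesis that $\tau$ is invariant to $\mathcal{P}$ with respect to $W^{pen}$ means that $\tau$ produces the identical matrix $W^{pen}$ for $S^k$ and for $\mathcal{P}(S^k)$.

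Next I would substitute into $f$. Writing $W^{last}_S,W^{pen}_S$ for the weights produced from input set $S$, the two hypotheses give $W^{last}_{\mathcal{P}(S^k)} = P\,W^{last}_{S^k}$ and $W^{pen}_{\mathcal{P}(S^k)} = W^{pen}_{S^k}$. Hence $f_{\mathcal{P}(S^k)}(x) = W^{last}_{\mathcal{P}(S^k)}\,\sigma\!\left(W^{pen}_{\mathcal{P}(S^k)}x\right) = \left(P\,W^{last}_{S^k}\right)\sigma\!\left(W^{pen}_{S^k}x\right)$. Because $\sigma$ is applied coordinate-wise and its argument $W^{pen}_{S^k}x$ is literally unchanged, the hidden vector $h:=\sigma(W^{pen}_{S^k}x)\in\reals^m$ is the same in both cases, and associativity of matrix multiplication yields $\left(P\,W^{last}_{S^k}\right)h = P\left(W^{last}_{S^k}h\right) = P\,f_{S^k}(x)$. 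This is exactly the assertion that $f$ is equivariant to $\mathcal{P}$ with respect to the input of $\tau$: permuting the class descriptions permutes the output logits by the same permutation, which is the property stated as $\mathcal{P}(f(x;\tau_\phi(S^k))) = f(x;\tau_\phi(\mathcal{P}(S^k)))$ in the main text.

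The computation is essentially one line once the bookkeeping is set up, so the only real subtlety — and the step I would be most careful with — is stating correctly how $\mathcal{P}$ acts on each block of $\tau$'s output: it permutes the $k$ class-rows of $W^{last}$, whereas $W^{pen}$ maps a fixed input feature space to a fixed hidden space and carries no class index for $\mathcal{P}$ to act on, which is precisely why \emph{invariance} is the appropriate requirement there rather than equivariance. I would also add a short remark that the argument extends by induction to deeper downstream networks: requiring every layer below the classification head to be invariant and only the head to be equivariant suffices, since each successive hidden representation stays fixed under $\mathcal{P}$ and the final permutation commutes through the top linear map exactly as above.
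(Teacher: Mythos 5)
Your proposal is correct and follows essentially the same route as the paper's proof: both arguments reduce to observing that the hidden activation $\sigma(W^{pen}x)$ is unchanged because $W^{pen}$ is invariant, and that permuting the rows of $W^{last}$ permutes the output logits, so the permutation factors through the final linear map. Your version is if anything slightly cleaner in its direction of computation (starting from $f$ on the permuted input rather than from the permuted output of $f$), but the substance is identical.
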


\begin{proof}
    From the equivariance of $f(x)$ to a permutation $P$ over the input $S^k$, we have $\mathcal{P}(f(x_i;\tau_{\phi}(S^k)) = f(x_i;\tau_{\phi}(\mathcal{P}(S^k))$. Denote by $m$ the number of rows of $W^{last}$ and $z^{pen}=\sigma(W^{pen} x)$. We have 
    \begin{equation}
    \begin{split}
        \mathcal{P}(f(x;\tau_{\phi}(S^k)) & = \mathcal{P}(W^{last}\sigma(W^{pen} x)) \\
        &=\mathcal{P}(W^{last}z^{pen})) \\
        & = \mathcal{P}(
        \begin{bmatrix}
        W^{last}_1 z^{pen} \\
        . \\
        . \\
        W^{last}_m z^{pen} \\
        \end{bmatrix}
    )\\
        & =
        \begin{bmatrix}
        W^{last}_{\mathcal{P}(1)} z^{pen} \\
        . \\
        . \\
        W^{last}_{\mathcal{P}(m)} z^{pen} \\
        \end{bmatrix}
        \\&= \mathcal{P}(W^{last})z^{pen}.
    \end{split}
    \end{equation}
    If $\tau(S^k)$ is equivariant to $\mathcal{P}$ with respect to $W^{last}$, and invariant to $\mathcal{P}$ with respect to $W_{pen}$, then $\tau(\mathcal{P}(S^k)) = [\mathcal{P}(W^{last}), W^{pen}]$, so     
    \begin{equation}
    \begin{split}
        \mathcal{P}(f(x;\tau_{\phi}(S^k)) 
        &= \mathcal{P}(W^{last})z^{pen} \\
        &= f(x;\tau_{\phi}(\mathcal{P}(S^k)).
    \end{split}
    \end{equation}
\end{proof}

\section{Multi-class classification}
\label{app:triplets}
To demonstrate the flexibility of our approach to deal with multiple classes, we evaluated T2M-HN in 3-way classification tasks. In each task, the on-demand model classifies the image into one out of three classes. For example, such a task could be to classify whether an image is a dog, a cat, or an elephant.
We use the same workflow as described in Section \ref{sec:experiments}, with $k=3$.
Results are in Table \ref{tab:triplets}. T2M-HN outperforms all baselines by a large margin. 

\begin{table}[h]
    \centering
    \setlength{\tabcolsep}{4pt}
    \scalebox{0.8}{
    \begin{tabular}{ l | c c c} 
        & \multicolumn{3}{c}{AwA triplets by class name}
        \cr & Seen   & Unseen & Harmonic\\
        \midrule
        DeViSE  & $95.1 \pm 0.7$ & $55.6 \pm 3.6$ & $70.2 \pm 1.2$ \\
        DEM & $94.6 \pm 0.7$ & $64.3 \pm 3.0$ & $76.6 \pm 1.1$\\
        CIZSL   & $97.0 \pm 0.4 $& $62.0 \pm 2.9$ & $75.6 \pm 2.1$ \\
        GRaWD  & $96.4 \pm 0.5$ & $68.5 \pm 3.0$ & $80.0 \pm 2.0$ \\
        T2M-HN (ours) & $\mathbf{98.1 \pm 0.1}$&  $\mathbf{75.3 \pm 0.1}$& $\mathbf{85.2 \pm 0.1}$ \\
        \bottomrule
    \end{tabular}
    }
    \caption{Classification by class descriptions. Mean classification accuracy and SEM on images from seen and unseen classes. Averages are over 100 random class triplets}
    \label{tab:triplets}
\end{table}


\section{3D point cloud multiclass classification}
\label{app:modelnet_res}
While T2M-HN is designed to excel in binary classification, it can be easily applied to multiclass problems. For comparison with previous models we evaluate its performance in multi-class settings, where T2M-HN predicts a model that classifies all seen and unseen classes, instead of two specific classes. 
Table \ref{tab:model_net_multiclass} shows the results of this experiment. We report the result when classifying new samples from the seen classes (30-classes classification) and from the unseen classes (10-classes classification).
T2M-HN achieves SOTA results in this setup as well. It leverages the text generalization of the HN model to distinguish between unseen classes. 

We further computed the top-$k$ accuracy achieved by running T2M-HN for the unseen classes. Figure \ref{fig:top_k} plots the accuracy as a function of $k$. T2M-HN provides superior accuracy for all tested values of $k$. To calculate the top-k performance of the GAN-based models, after generating the images, we checked if any of $K$ closest neighbors of an image is of the correct class.
\begin{table}[t!]
    \setlength{\tabcolsep}{3pt}
    \centering
    \scalebox{0.99}{
    \begin{tabular}{l| c c c} 
        & \multicolumn{3}{c}{ModelNet40 by class name}\\
         \textbf{}  & Seen   & Unseen & Harmonic\\
        \midrule
        DeViSE  & $47.2$ & $14.5$ & $22.2$\\
        DEM  & $46.8$ & $7.0$ & $12.3$\\
        CIZSL & $75.6 $& $6.0$ & $11.0$ \\
        GRaWD  & $75.2$ & $10.9$ & $19.0$\\
        T2M-HN (ours) & $\mathbf{76.3}$ & $\mathbf{18.9}$ & $\mathbf{30.3}$\\
        \bottomrule
    \end{tabular}
    }
    \caption{\textbf{3D point-cloud object recognition using single-word class names}. Multiclass accuracy on seen and unseen classes for ModelNet-40. The seen accuracy is between 30 classes, and the unseen accuracy is between 10 classes. 
    }
    \label{tab:model_net_multiclass}
\end{table}

\begin{figure}[t]
    \centering
    \includegraphics[width=0.8\linewidth]{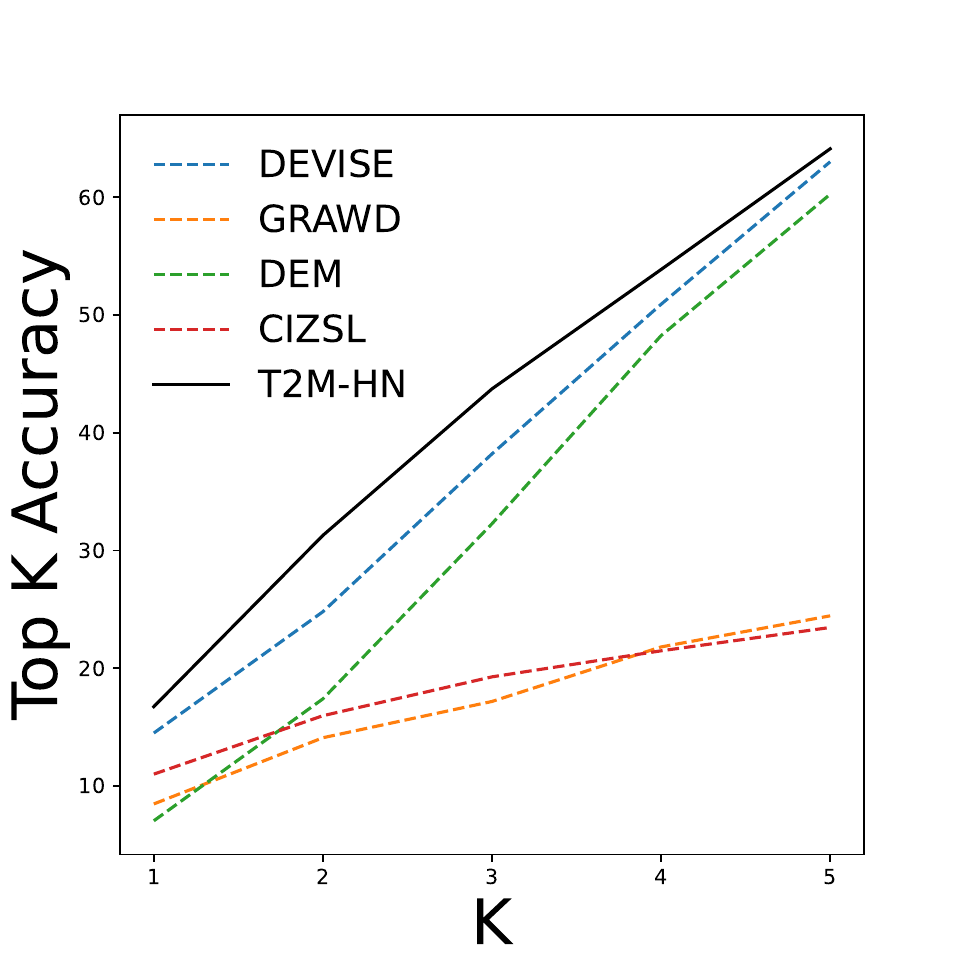}
    \caption{Accuracy at k for experiments with 3D point cloud from ModelNet-40. The solid line is our T2M model, dashed lines are for the baseline models.} 
    \label{fig:top_k}
\end{figure}

\begin{table}[t!]
    \centering
    \setlength{\tabcolsep}{4pt}
    \scalebox{0.8}{
    \begin{tabular}{ l | c c c} 
        & \multicolumn{3}{c}{AwA Super Sets}
        \cr & Seen   & Unseen & Harmonic\\
        \midrule
        DeViSE &$53.0 \pm 1.9$& $50 \pm 0.6$& $51.5 \pm 0.9$ \\
        DEM  & $50.1 \pm 1.4$ & $48.3 \pm 1.8$ &  $49.2 \pm 1.6$\\
        CIZSL   &$57.3 \pm 5.6$& $50.2 \pm 5.8$&$55.0 \pm 4.0$ \\
        GRaWD &$59.8 \pm 3.5$& $51.6 \pm 4.8$&$55.3 \pm 3.1$ \\
        T2M-HN (ours) & $\mathbf{67.2 \pm 5.2}$ & $\mathbf{57.3 \pm 5.7}$ & $\mathbf{61.9 \pm 5.4}$ \\
        \bottomrule
    \end{tabular}
    }
    \caption{\textbf{Classification using attributes}. Values denote the Area under the Recall-Precision curve averaged over the 13 test attributes $\pm$ s.e.m. over these attributes. The seen results are new images from the seen classes, while the unseen results are images from unseen classes. Both are evaluated when classifying only the test attributes. The full protocol is in \ref{app:supersets}. 
    }
    \label{tab:set-of-classes}
\end{table}

\subsection{The Impact of Equivariance Design on HNs}
\label{app:ablation}

To evaluate the effect of the equivariance property  on our HN-based model performance, we compared variants with and without the equivariance design. We repeat the experiment for an on-demand model with one or two fully connected layers. 
Figure \ref{fig:results-ablation} shows the mean accuracy of the following variants: 
\textbf{(1) T2M-HN 1-layer} An equivariant HN that predicts one equivariant FC layer;
\textbf{(2) 1-layer w.o. EV} A FC HN that predicts one fully connected layer;
\textbf{(3) T2M-HN 2-layers } An equivariant HN that predicts two FC layers for the on-demand model: The first is invariant and the second is equivariant; and
\textbf{(4) 2-layer w.o. EV} A FC HN that predicts two FC layers.
There is no big difference in the number of parameters between EV and non-EV architectures:
(1) T2M-HN 1-layer - 244K parameters,
(2) 1-layer w.o. EV - 183K parameters,
(3) T2M-HN 2-layers - 7.6M parameters,
(4) 2-layer w.o. EV - 7.5 M parameters.

In all cases, the equivariant HN performs better than the simple fully connected.
For AwA, T2M-HN 1-layer performs better than T2M-HN 2-layers. We believe this is because ResNet backbone separates the images to be linearly separable. For BABEL, we used 2s-AGCN as a features extractor and in that case, T2M-HN 2-layer generalizes better to unseen classes.

\begin{figure}[h!]
    \centering
    \includegraphics[width=0.49\textwidth]{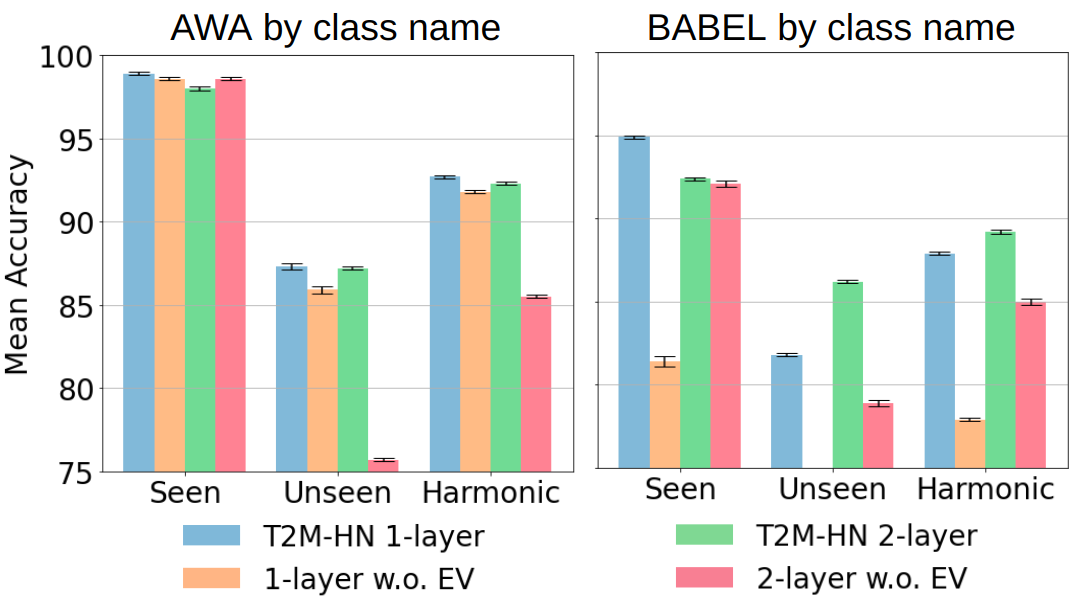} 
    \caption{Ablation study. Mean classification accuracy (averaged across class pairs) on seen and unseen classes and their harmonic mean for the AWA and BABEL datasets. }
    \label{fig:results-ablation}
\end{figure}

\section{AwA GPT-3 descriptions}
\label{app:gpt-examples}
We use GPT3 \cite{gpt3brown2020} to generate 5 synthetic descriptions for each class of AwA. During training and evaluation, we randomly choose one description for each class in the batch, from its corresponding 5 class descriptions.
We use the API provided by OpenAI to ask "text-davinci-002" engine with a temperature of 0, max tokens of 512, and the prompt: "Suggest 5 definitions for an animal. Animal: \{animal\_name\}. Definitions:"

\noindent\textbf{Animal: moose}\newline
Definitions:
\begin{enumerate}[noitemsep]
    \item A large, dark-colored deer with enormous antlers, native to North America and Europe.
    \item An animal of the deer family with humped shoulders, long legs, and a large head with antlers.
    \item A large, awkward-looking mammal with a long face and humped shoulders.
    \item A very large deer with antlers that can spread six feet or more from tip to tip.
    \item The largest member of the deer family, with males weighing up to 1,800 pounds and having antlers that can spread up to six feet from tip to tip.
\end{enumerate}

\noindent\textbf{Animal: spider monkey}\newline
Definitions:
\begin{enumerate}[noitemsep]
    \item A type of monkey that has long legs and arms and a long tail.
    \item A monkey that is found in the rainforests of Central and South America.
    \item A monkey that is known for its acrobatic abilities.
    \item A monkey that is considered to be one of the most intelligent primates.
    \item A monkey that is endangered in many parts of its range.
\end{enumerate}

\noindent\textbf{Animal: rhinoceros} \newline
Definitions:
\begin{enumerate}[noitemsep]
    \item A large, thick-skinned mammal with one or two horns on its snout, native to Africa and southern Asia.
    \item An animal that is hunted for its horn, which is used in traditional Chinese medicine.
    \item A large, herbivorous mammal with a single horn on its nose, found in Africa and southern Asia.
    \item A mammal of the family Rhinocerotidae, having thick, grey or brown skin and one or two horns on the snout.
    \item A very large, plant-eating mammal with one or two horns on its nose, found in Africa and southern Asia.
\end{enumerate}

\textbf{Elephant}:
\begin{enumerate}[noitemsep]
    \item The largest land animal in the world, with males weighing up to six tons.
    \item A plant-eating mammal with a long trunk, large ears, and thick, grey skin.
    \item A mammal of the family Elephantidae, having a long trunk, large ears, and thick, grey skin.
    \item An intelligent animal that is known for its memory and its ability to use its trunk for a variety of tasks.
    \item An endangered species that is hunted for its ivory tusks.
\end{enumerate}

\section{Data splits}
\label{app:dataset-splits}

\textbf{SUN unseen classes:} 'volcano', 'poolroom establishment',  'veterinarians office',  'reception', 'field wild', 'diner indoor', 'garbage dump', 'server room', 'vineyard', 'jewelry shop', 'drugstore', 'herb garden', 'lock chamber', 'temple east asia', 'marsh', 'cottage garden', 'cathedral outdoor', 'dentists office', 'pharmacy', 'hangar indoor', 'volleyball court indoor', 'lift bridge', 'synagogue outdoor', 'boathouse', 'ice shelf', 'boxing ring', 'rope bridge', 'electrical substation', 'auditorium', 'chalet', 'booth indoor', 'wine cellar barrel storage',
 'greenhouse outdoor', 'badminton court indoor', 'thriftshop',
 'cemetery',
 'rainforest',
 'courtyard',
 'underwater coral reef',
 'formal garden',
 'ice skating rink outdoor',
 'palace',
 'movie theater indoor',
 'dinette home',
 'sandbar',
 'ball pit',
 'amphitheater'

\textbf{SUN seen classes:}
All remaining classes.

\textbf{ModelNet40:} We follow \cite{cheraghian2022zero-PCZS, DBLP:conf/mva/CheraghianRP19, DBLP:conf/3dim/MicheleBPBM21} and use the 10 classes included in ModelNet-10 as unseen classes, and the other 30 as seen. 

\textbf{BABEL unseen classes:} 'a pose', 'action with ball', 'adjust', 'catch', 'clean something', 'communicate (vocalise)', 'crawl', 'get injured', 'hand movements', 'hop', 'limp', 'mix', 'play sport', 'press something', 'rolling movement', 'shuffle', 'side to side movement', 'sneak', 'spread', 'support', 'swing body part', 'trip', 'upper body movements', 'wait'

\textbf{BABEL seen classes: } All remaining classes.

\textbf{CUB unseen classes: } 'Acadian Flycatcher', 'American Crow', 'American Three Toed Woodpecker', 'Baltimore Oriole', 'Bank Swallow', 'Belted Kingfisher', 'Black Billed Cuckoo', 'Black Footed Albatross', 'Black Throated Sparrow', 'Boat Tailed Grackle', 'Bohemian Waxwing', 'Brandt Cormorant', 'Brewer Blackbird', 'Cape May Warbler', 'Cedar Waxwing', 'Chestnut Sided Warbler', 'Field Sparrow', 'Golden Winged Warbler', 'Grasshopper Sparrow', 'Gray Crowned Rosy Finch', 'Great Crested Flycatcher', 'Great Grey Shrike', 'Groove Billed Ani', 'Hooded Oriole', 'Horned Grebe', 'Indigo Bunting', 'Least Auklet', 'Least Tern', 'Marsh Wren', 'Mockingbird', 'Northern Flicker', 'Northern Waterthrush', 'Pacific Loon', 'Pied Billed Grebe', 'Pomarine Jaeger', 'Purple Finch', 'Red Legged Kittiwake', 'Rhinoceros Auklet', 'Sayornis', 'Scott Oriole', 'Tree Sparrow', 'Tree Swallow', 'Western Grebe', 'Western Gull', 'Western Wood Pewee', 'White Breasted Kingfisher', 'White Eyed Vireo', 'White Pelican', 'Wilson Warbler', 'Yellow Bellied Flycatcher', 'Yellow Billed Cuckoo'

\textbf{CUB seen classes: } All remaining classes.

\section{Attributes used for one-class classification}
\label{app:supersets}
As mentioned in section \ref{sec:set-of-classes}, we use some of the attributes from the AwA dataset to define one-class classification tasks. First, we removed non-visual attributes. Then, we randomly split the remaining 53 attributes into 30 train, 10 validation, and 13 test attributes. We split both the images and the attributes, constructing 4 groups of images and attributes: (1) \textit{Training images} from training attributes and training classes, used to train the hypernetwork; (2) \textit{Validation images} from the training classes, with the validation attributes used to tune hyperparameters; (3) \textit{Test images from seen classes}, new images of test attributes, whose class was seen during training (but not the specific images); and (4) \textit{Test images from unseen classes}, new images of test attributes, whose class was not seen during training. We report the average Area under the Recall-Precision curve over seen (group (3)) and unseen classes (group (4)).
The results are shown in Figure \ref{fig:one_class} and in Table \ref{tab:set-of-classes}. The attributes split is as follows:

\textbf{AwA train attributes:} 'orange', 'red', 'longneck', 'horns', 'tusks', 'flys', 'desert', 'cave', 'jungle', 'water', 'bush', 'lean', 'forest', 'gray', 'strainteeth', 'stripes', 'mountains', 'arctic', 'paws', 'hooves', 'pads', 'small', 'furry', 'ground', 'patches', 'white', 'fields', 'bipedal', 'toughskin', 'plains'.

\textbf{AwA validation attributes:} 'buckteeth', 'chewteeth', 'yellow', 'hairless', 'bulbous', 'big', 'flippers', 'tree', 'walks', 'coastal'.

\textbf{AwA test attributes:} 'quadrapedal', 'black', 'blue', 'ocean', 'longleg', 'spots', 'hands', 'claws', 'muscle', 'meatteeth', 'tail', 'brown', 'swims'.

\end{document}